\def\BibTeX{{\rm B\kern-.05em{\sc i\kern-.025em b}\kern-.08em
    T\kern-.1667em\lower.7ex\hbox{E}\kern-.125emX}}
\DeclareMathAlphabet\EuRoman{U}{eur}{m}{n}
\SetMathAlphabet\EuRoman{bold}{U}{eur}{b}{n}
\declaretheorem[style=plain,numberwithin=section,name=Theorem]{theorem}
\declaretheorem[style=plain,sibling=theorem,name=Corollary]{corollary}
\declaretheorem[style=definition,sibling=theorem,name=Definition]{definition}
\declaretheorem[style=remark,qed=$\triangleleft$,sibling=theorem,name=Remark]{remark}
\numberwithin{theorem}{section}
\def\[#1\]{\begin{align}#1\end{align}}
\def\*[#1\]{\begin{align*}#1\end{align*}}
\newcommand{\minf}[1]{I(#1)}
\newcommand{\entr}[1]{\mathrm{H}(#1)}
\newcommand{\centr}[2]{\mathrm{H}^{#1}(#2)}
\newcommand{\cPr}[2]{\Pr^{#1}[#2]}
\newcommand{\Dist}{\mathcal D}
\newcommand{\dataspace}{\mathcal Z}
\newcommand\optparen[1]{\ifthenelse{\equal{#1}{}}{}{(#1)}}
\newcommand{\RiskChar}{R}
\newcommand{\Risk}[2]{\RiskChar_{#1}\optparen{#2}}
\newcommand{\EmpRisk}[2]{\hat \RiskChar_{#1}\optparen{#2}}
\newcommand{\Naturals}{\mathbb{N}}
\newcommand{\Reals}{\mathbb{R}}
\newcommand{\as}{\textrm{a.s.}}
\newcommand{\dee}{\mathrm{d}}
\newcommand{\inspace}{\mathcal X}
\newcommand{\outspace}{\mathcal Y}
\DeclareMathOperator*{\newlim}{\mathrm{lim}\vphantom{\mathrm{infsup}}}
\DeclareMathOperator*{\newmax}{\mathrm{max}\vphantom{\mathrm{infsup}}}
\DeclareMathOperator*{\newinf}{\mathrm{inf}\vphantom{\mathrm{infsup}}}
\renewcommand{\lim}{\newlim}
\renewcommand{\max}{\newmax}
\renewcommand{\inf}{\newinf}
\newcommand{\ProbMeasures}[1]{\mathcal{M}_1(#1)}
\renewcommand{\Pr}{\mathbb{P}}
\def\EE{\mathbb{E}}
\newcommand{\defn}[1]{\textit{#1}}
\newcommand{\equaldist}{\overset{d}{=}}
\newcommand{\iid}{i.i.d.}
\newcommand{\KLname}{\mathrm{KL}}
\newcommand{\KL}[2]{\KLname(#1 \,\|\,#2)}
\newcommand{\XX}{\Reals^{\idim}}
\newcommand{\YY}{K}
\newcommand{\loss}{\ell}
\newcommand{\Alg}{\mathcal{A}}
\newcommand{\lcrx}[4][{-1}]{
	\IfEq{#1}{-1}{\left #2 {{{{#3}}}} \right #4}{
   	\IfEq{#1}{0}{#2 {{{{#3}}}} #4}{
	\IfEq{#1}{1}{\bigl #2 {{{{#3}}}} \bigr #4}{
	\IfEq{#1}{2}{\Bigl #2 {{{{#3}}}} \Bigr #4}{
	\IfEq{#1}{3}{\biggl #2 {{{{#3}}}} \biggr #4}{
	\IfEq{#1}{4}{\Biggl #2 {{{{#3}}}} \Biggr #4}{
    \GenericWarning{"4th argument to lcrx must be -1, 0, 1, 2, 3, or 4"}
    }}}}}}}
\newcommand{\rbra}[2][{-1}]{\lcrx[#1] ( {#2} ) }
\newcommand{\sbra}[2][{-1}]{\lcrx[#1] [ {#2} ] }
\newcommand{\rnderiv}[2]{\frac{\text{d} #1}{\text{d} #2}}
\newcommand{\cEE}[1]{\EE^{#1}}
\newcommand{\indep}{\mathrel{\perp\mkern-9mu\perp}}
\newcommand{\dminf}[2]{I^{#1} (#2)}
\newcommand{\indic}[1]{ \mathds{1}[#1]}
\newcommand{\supersam}{\tilde{Z}}
\newcommand{\range}[1]{ [#1] }
\newcommand{\vcd}{d} %
\newcommand{\targetfun}{h^\star}
\newcommand{\binaryentr}[1]{\mathrm{H}_{b}(#1)}
\newcommand{\loer}{\ensuremath{\hat R_{\mathrm{loo}}}\xspace}
\renewcommand{\defn}[1]{\emph{#1}}
\newcommand{\EGE}{\ensuremath{\mathrm{EGE}_{\Dist}(\Alg)}}
\newcommand{\loecmi}{LOO${}^e$CMI\xspace}
\newcommand{\lecmi}{\ensuremath{\mathrm{LOO}{}^e\mathrm{CMI}_{\Dist}(\Alg)}\xspace}
\newcommand{\lossvec}{L}
\newcommand{\lossreal}[1]{0_{(#1)}}
\crefname{lemma}{Lemma}{Lemmas}
\crefname{corollary}{Corollary}{Corollaries}
\crefname{theorem}{Theorem}{Theorems}
\crefname{problem}{Conjecture}{Conjectures}
\crefname{statement}{Statement}{Statement}
\newcommand{\HS}{\mathcal H}    
\begin{document}
\setlength{\marginparsep}{0.05in}

\title[Leave-One-Out Conditional Mutual Information]
{Understanding Generalization via Leave-One-Out Conditional Mutual Information}

\author[M. Haghifam]{Mahdi Haghifam$^{1,2}$}
\author[S. Moran]{Shay Moran$^{3,4}$}
\author[D. M. Roy]{Daniel M. Roy$^{1,2}$}
\author[G. K. Dziugaite]{Gintare Karolina Dziugaite$^{4,5}$}

\thanks{$^1$ University of Toronto\ \ $^2$ Vector Institute\ \ $^3$ Technion\ \ $^4$ Google Research \ \ $^5$ Mila; McGill. 
This work was presented at the
2022 IEEE Int.\ Symposium on Information Theory \cite{haghifam2022isit}.}

\maketitle

\begin{abstract}
We study the mutual information between (certain summaries of) the output of a learning algorithm and its $n$ training data, conditional on a supersample of $n+1$ i.i.d.\ data from which the training data is chosen at random without replacement. These leave-one-out variants of the conditional mutual information (CMI) of an algorithm (Steinke and Zakynthinou, 2020) are also seen to control the mean generalization error of learning algorithms with bounded loss functions. For learning algorithms achieving zero empirical risk under 0--1 loss (i.e., interpolating algorithms), we provide an explicit connection between leave-one-out CMI and the classical leave-one-out error estimate of the risk. Using this connection, we obtain upper and lower bounds on risk in terms of the (evaluated) leave-one-out CMI. When the limiting risk is constant or decays polynomially, the bounds converge to within a constant factor of two. As an application, we analyze the population risk of the one-inclusion graph algorithm, a general-purpose transductive learning algorithm for VC classes in the realizable setting. Using leave-one-out CMI, we match the optimal bound for learning VC classes in the realizable setting, answering an open challenge raised by Steinke and Zakynthinou (2020). Finally, in order to understand the role of leave-one-out CMI in studying generalization, we place leave-one-out CMI in a hierarchy of measures, with a novel unconditional mutual information at the root. For 0--1 loss and interpolating learning algorithms, this mutual information is observed to be precisely the risk.
\end{abstract}

\renewcommand{\XX}{\mathcal X}
\renewcommand{\YY}{\mathcal Y}
\renewcommand{\SS}{S_n}
\renewcommand{\Alg}{\mathcal A_n}
\newcommand{\TheAlg}{\mathcal A}
\newcommand{\SSS}{\inspace}
\newcommand{\TTT}{\mathcal {T}}

\section{Introduction}

In this paper, we study 
generalization in supervised learning. Formally, consider spaces of inputs $\inspace$, labels $\outspace$, and classifiers $\HS\subseteq \outspace^ \inspace$.
Let $\Dist$ be a distribution on $\dataspace =   \inspace \times \outspace  $. The \emph{empirical risk} of a classifier $h: \inspace \to \outspace$ on a sample $s=((x_1,y_1),\dots,(x_n,y_n)) \in \dataspace^n$
 is $\EmpRisk{s}{h} = n^{-1} \sum_{i \in [n]} \loss(h,(x_i,y_i))$, where
$\loss: \outspace^ \inspace \times \dataspace \to \Reals_{+}$ is the loss function.  Let $\SS \sim \Dist^n$, i.e., let $\SS$ be a sequence of i.i.d.\ random elements in $\dataspace$ with common distribution $\Dist$. The \emph{risk} of $h$ is $\Risk{\Dist}{h} = \EE\EmpRisk{\SS}{h}$, 
 where $\EE$ denotes the expectation operator. 
 A distribution $\Dist$ is said to be \defn{realizable by a class $\HS \subseteq \YY^\XX$} if $\inf_{h \in \HS}\Risk{\Dist}{h} = 0$. 
 Let $\TheAlg = (\Alg)_{n\ge 1}$ denote a (potentially randomized) learning algorithm, which, for any positive integer $n$, maps $\SS$ to an element $\Alg(\SS)$ of $\outspace^\inspace$. 
 We say $\Alg$ is an interpolating algorithm for datasets of size $n$  if $\EmpRisk{\SS}{\Alg(\SS)}=0$ a.s. If $\TheAlg$ is interpolating for all $n$, we say $\TheAlg$ is  consistent.  
Our primary interest in this paper is the \defn{expected generalization error} of $\Alg$ with respect to $\Dist$, defined as
\*[
 \EGE = \EE \sbra[0]{\Risk{\Dist}{\Alg(\SS)}- \EmpRisk{\SS}{\Alg(\SS)} }   ,
\]
where we average over both the choice of training sample and the randomness within the algorithm $\Alg$. Characterizing $\EGE$ provides us a handle on the performance of the learned classifier on unseen data. Note that, for interpolating algorithms, expected generalization error is expected risk, i.e., $\EGE = \EE \sbra[0]{\Risk{\Dist}{\Alg(\SS)}}\triangleq \Risk{\Dist}{\Alg}$.

Among the many contributions of learning theory are various frameworks for analyzing the generalization error of learning algorithms. Examples include uniform convergence and VC theory \citep{vapnik1974theory}, stability \citep{bousquet2002stability}, and
differential privacy \citep{dwork2015generalization}.
In recent years, there has been a flurry of interest in the use of information-theoretic quantities for characterizing the expected generalization error of learning algorithms. This line of work, initiated by \citet{RussoZou16} and \citet{XuRaginsky2017},  has been extended in many directions. The basic result is that the generalization error can be bounded in terms of the mutual information between the training data and the learned classifier. Unfortunately, this framework is provably unable to explain generalization in some settings \citep{bassily2018learners,nachum2018direct,roishay,BuZouVeeravalli2019}. For instance, \citet{roishay} show that, for the class of one-dimensional thresholds\footnote{
This concept class can be defined as follows. Let $\mathcal{X}=\{1,...,m\}$. 	
The class of one-dimensional thresholds over $\{1,...,m\}$ is  $\mathcal{H}_m=\{h_k\vert k\in \mathbb{N}\}$, where $h_k: \mathcal{X} \to \{0,1\}$ is $h_k(x)=\mathds{1}[x> k]$.}
 over $\{1,\dots,m\}$, $m \in \Naturals$, for \emph{every} learning algorithm $\TheAlg$, there exists a realizable distribution such that either the risk (population loss) is ``large'' or the mutual information scales as  $\Omega(m)$, leading to vacuous generalization bounds. However, for the same problem, VC theory implies that the population risk of \emph{every} empirical risk minimizer goes to zero at the rate of $O(1/n)$.

In order to overcome some of these limitations, \citet{steinke2020reasoning} propose a modified information-theoretic framework, which they dub \emph{Conditional Mutual Information} (CMI).
In this framework, a ``supersample'' is introduced and the training set is obtained as a subsample.
In particular, the supersample is
a $2 \times n$ array, $\supersam \in \dataspace^{2\times n}$, composed of  \iid~samples from the data distribution $\Dist$.
Then,
for an array $\tilde{U}$ independent from $\supersam$ and uniformly distributed in $\{0,1\}^n$ (hence, $\tilde{U}$ is composed of \iid\ Bernoulli random variables with mean $1/2$), 
the training set 
is
$\SS=(\supersam_{\tilde{U}_i,i})_{i \in [n]}$.
The conditional mutual information (CMI) of a learning algorithm $\Alg$ on $n$ training data from $\Dist$ is defined to be 
$\minf{\Alg(\SS);\tilde U\vert \supersam}$,
i.e., the conditional mutual information of the output classifier $\Alg(\SS)$ and $\tilde{U}$, conditioned on $\supersam$. 
In the same work, 
the authors also introduce the notion of the 
\emph{evaluated} CMI of an algorithm, $\minf{L;\tilde U\vert \supersam}$,
where $L$ is the $2 \times n$ array of losses incurred by $\Alg(\SS)$ on the supersample $\supersam$.
Results by \citet{steinke2020reasoning,haghifam2021towards,harutyunyan2021information} show that CMI and its evaluated CMI (eCMI) counterpart provide an expressive framework for bounding generalization,
encompassing multiple existing frameworks for proving generalization. 
Despite these results, it is not known whether the CMI/eCMI framework can be used to characterize minimax rates of expected risk in all circumstances. 
As one key example, it is not known whether any information-theoretic framework can obtain minimax rates for supervised classification problems with VC classes on realizable distributions.

\subsection{Our Results}
In this paper, we introduce a new information-theoretic measure of dependence between the output of a learning algorithm and its input,
based on a leave-one-out analogue of eCMI:
\begin{definition}
\label{def:loocmi}
Let $\TheAlg$ be a learning algorithm. 
Let $n \in \Naturals$, let $\supersam=(\supersam_{i})_{i \in  [n+1]}$ be an $n+1$-array of i.i.d.\ random elements in the dataspace $\dataspace$ with common distribution $\Dist$. Let $U$ be a random variables distributed uniformly on $[n+1]$, independent from $\supersam$.
For $u \in [n+1]$, let $\supersam_{-u}$ denote $(\supersam_{j})_{j\in [n+1],j\neq u}$, i.e., the supersample with the $u$'th element removed. Define $\SS=\supersam_{-U}$.
Let $L\in \Reals_{+}^{n+1}$ be the array with entries $L_{i}=\loss(\Alg(\SS),\supersam_{i})$ for $i\in \range{n+1}$.  The \defn{leave-one-out (evaluated) conditional mutual information of $\Alg$ with respect to $\Dist$} is 
\*[
\lecmi \triangleq \minf{L;U\vert\supersam}.
\]
\end{definition}
Our notion is inspired by the \emph{leave-one-out error} estimator of the generalization error \citep{mohri2018foundations},  a widely used surrogate for the expected generalization error. Intuitively, \lecmi measures how well one can ``identify'' which point from the supersample is being held-out, given the supersample and the losses incurred on each of its elements. 
In \cref{sec:gen-bounds}, we show that bounded \lecmi implies generalization in both interpolating and agnostic learning algorithms. Our generalization bound for the interpolating case enjoys the property that it is never greater than one. 

\NA{
In \cref{sec:connect-leaveout}, in the context of 0--1 loss and interpolating learning algorithms, 
we establish a general connection between \loecmi and the classical leave-one-out error estimator of the risk.
Using this result, we show that \lecmi precisely characterizes the risk of interpolating learning algorithms in many common situations. Specifically, we show that, for every data distribution and interpolating learner, the \lecmi framework yields a risk bound that vanishes  \emph{if and only if} the risk also vanishes as the  number training samples diverges. Also, the \lecmi framework is the first information-theoretic framework that can be shown to determine the risk of any consistent learning algorithms whose risk either converge to a non-zero value or converges to zero polynomially with the number of samples. 
}

In \cref{sec:appl}, as an application of our general connection with leave-one-out error analysis, we characterize the \loecmi of the one-inclusion graph algorithm, which was introduced by \citet{haussler1994predicting} for learning Vapnik--Chervonenkis (VC) classes. Using our framework, we obtain the \emph{optimal} risk bound for every VC class in the realizable setting. 
In doing so, we answer the open problem stated in \citep{steinke2020openproblem} of characterizing the expected excess risk of learning VC classes using an
information-theoretic framework. 

In \cref{sec:grun}, we consider several additional measures of information based on the supersample structure introduced in \cref{def:loocmi},
show that they all control generalization,
and discuss their inter-relationships.
In particular, we present the chain of inequalities
\[
\minf{L;U} 
\leq \minf{L;U\vert \supersam} 
\leq \minf{\hat{Y};U\vert \supersam} 
\leq \minf{\Alg(\SS);U\vert \supersam} 
\leq \minf{\Alg(\SS);\SS}, \label{eq:thechain}
\]
where $\minf{\Alg(\SS);U\vert \supersam}$ is the non-evaluated analogue of our notion and $\hat Y$ is the length-$n+1$ list of labels predicted by $\Alg(\SS)$ on the inputs in the supersample $\supersam$.
With the exception of the first quantity, $\minf{L;U}$, all of these (or close analogues) have been studied in the literature. In the special case of binary classification by an interpolating classifier, $\minf{L;U}$ is precisely the risk, yielding a simple argument for why the other notions also bound risk (equivalently, generalization error) in this setting. 
Based on results presented herein and elsewhere, we discuss gaps between these various notions in \cref{eq:thechain} and the roles they can play in understanding generalization. 
In particular, we show that \loecmi is the weakest measure in this chain that can  characterize the risk for every interpolating algorithm under 0--1 loss.

\color{black}

\subsection{Related Work}

For supervised learning algorithms, \citet[Sec.~6]{steinke2020reasoning} and \citet{harutyunyan2021information}  define information-theoretic measures of dependence based on the losses and predictions, respectively, of a learning algorithm rather than the learned classifier. 
The results in both of these papers are based on the CMI framework, i.e., a supersample with $2n$ samples. 
Neither of these papers recover  optimal bounds for learning VC classes in the realizable setting. 
\citet{CCMI20} combine chaining with CMI to study generalization of deterministic learning algorithms.
For ERM on classes of finite VC dimension $d$ in the \emph{agnostic} case, i.e., $\inf_{h\in \HS}\Risk{\Dist}{h}\neq 0$,
\citeauthor{CCMI20} use chaining CMI to obtain bounds on the expected generalization error that achieve the optimal rate $O(\sqrt{d/n})$. 
See also \citep{aminian2021jensen,clerico2022chained,zhou2020individually,hafez2022information, IbrahimEspositoGastpar19,JiaoHanWeissman17,asadi2018chaining,zhou2022stochastic,rodriguez2020random}.

\subsection{Notation}
Let $P,Q$ be probability measures. %
For a $P$-integrable  
function $f$, let $\smash{P[f] = \int f \dee P}$.
When $Q$ is absolutely continuous with respect to $P$, denoted $Q \ll P$,
write $\rnderiv{Q}{P}$ for (an arbitrary version of) the 
Radon--Nikodym derivative (or density) of $Q$ with respect to $P$. 
The \defn{KL divergence} (or \defn{relative entropy}) \defn{of $Q$ with respect to $P$},
denoted $\KL{Q}{P}$, is defined as $Q[ \log \rnderiv{Q}{P} ]$ when $Q \ll P$ and
infinity otherwise.

For a random element $X$ in some measurable space $\SSS$,
let $\Pr[X]$ denote its distribution, which lives in the space
$\ProbMeasures{\SSS}$ of all probability measures on $\SSS$.
Given another random element, say $Y$ in $\TTT$,
let
$\cPr{Y}{X}$ denote the conditional
distribution of $X$ given $Y$.
If $X$ and $Y$ are independent, denoted by $X\indep Y$, we have $\cPr{Y}{X} = \Pr[X]$ a.s. 

The \defn{mutual information between $X$ and $Y$} 
is $
\minf{X;Y} = \KL{\Pr[(X,Y)]} { \Pr[X] \otimes \Pr[Y]},$
where $\otimes$ forms the product measure. 
Writing $\cPr{Z}{(X,Y)}$ for the conditional distribution of the pair $(X,Y)$
given a random element $Z$,
the \defn{disintegrated mutual information between $X$ and $Y$ given $Z$,} is $
 \dminf{Z}{X;Y} =  \KL{\cPr{Z}{(X,Y)}}{\cPr{Z}{X} \otimes \cPr{Z}{Y} }$, and the conditional mutual information is $
 \minf{X;Y\vert Z}=\EE{[\dminf{Z}{X;Y}]}$. Similarly, we can define \defn{disintegrated entropy of $X$ given $Y$} denoted by $\centr{Y}{X} $, and its expectation gives the \defn{conditional entropy of $X$ given $Y$}, i.e.,  $\entr{X\vert Y} =  \EE[\centr{Y}{X}]$.

\section{Generalization Bounds}
\label{sec:gen-bounds}
In this section, we show that bounded \loecmi implies generalization. 
First, we provide a generalization bound for interpolating learning algorithms:
\begin{theorem}
\label{thm:genbound-consistent}
Let $n \in \Naturals$, assume loss is bounded in $[0,1]$, and let $\Alg$ be an interpolating learning algorithm. 
Then,
\begin{equation*}
   \Risk{\Dist}{\Alg} \leq \frac{\lecmi}{\log (n+1)}. 
\end{equation*}
\end{theorem}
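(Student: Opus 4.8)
The plan is to expand $\lecmi = \minf{L;U\vert\supersam}$ through the entropy decomposition and to exploit the rigid structure that interpolation imposes on the loss vector $L$. First I would observe that, since $U$ is uniform on $[n+1]$ and independent of $\supersam$, we have $\entr{U\vert\supersam} = \log(n+1)$, so that
\[
\minf{L;U\vert\supersam} = \log(n+1) - \entr{U\vert L,\supersam}.
\]
It therefore suffices to show $\entr{U\vert L,\supersam}\le \log(n+1)\,(1-\Risk{\Dist}{\Alg})$.

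The key structural observation is that interpolation forces $L_j = 0$ for every $j\neq U$ almost surely: the training set is $\SS=\supersam_{-U}$, so $\EmpRisk{\SS}{\Alg(\SS)}=0$ together with nonnegativity of $\loss$ gives $\loss(\Alg(\SS),\supersam_j)=0$ for all $j\neq U$ (this holds a.s., including over any internal randomness of $\Alg$). Consequently the only coordinate of $L$ that can be nonzero is $L_U$, and whenever $L\neq 0$ the location of its unique nonzero entry reveals $U$ exactly. I would then bound the conditional entropy by disintegrating over $\supersam$ and splitting on the event $\{L=0\}$: on $\{L\neq 0\}$, $U$ is a deterministic function of $L$ and contributes nothing, while on $\{L=0\}$ the conditional entropy of $U$ is at most $\log(n+1)$ (the uniform bound over the $n+1$ possibilities). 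This yields $\entr{U\vert L,\supersam}\le \Pr[L=0]\,\log(n+1)$.

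To finish, I would connect $\Pr[L=0]$ to the risk. By exchangeability of the $(n+1)$ i.i.d.\ entries of $\supersam$, the held-out point $\supersam_U$ is a fresh sample from $\Dist$ independent of $\SS=\supersam_{-U}$, so $\Risk{\Dist}{\Alg}=\EE[L_U]$. Since $L_U\in[0,1]$, we have $\EE[L_U]\le \Pr[L_U>0]=1-\Pr[L=0]$, i.e.\ $\Pr[L=0]\le 1-\Risk{\Dist}{\Alg}$. Combining the three displays gives
\[
\minf{L;U\vert\supersam} \ge \log(n+1) - \log(n+1)\,(1-\Risk{\Dist}{\Alg}) = \log(n+1)\,\Risk{\Dist}{\Alg},
\]
which rearranges to the claim.

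I expect the only delicate points to be bookkeeping rather than substance: invoking the structural identity $L_j=0$ for $j\neq U$ almost surely (so it survives algorithmic randomization), and carrying out the entropy split and the bound $\entr{U\vert L=0,\supersam}\le\log(n+1)$ at the level of the disintegration over $\supersam$ before taking expectations. The conceptual crux — and the step that makes the $\log(n+1)$ factor appear — is the observation that $L$ pins down $U$ unless the leave-one-out loss vanishes, so that all the residual uncertainty about $U$ is confined to the zero-loss (correctly predicted) event, whose probability is controlled by the complement of the risk.
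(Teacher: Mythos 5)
Your proof is correct and reaches the theorem by a genuinely different route than the paper. The paper's proof is variational: it applies the Donsker--Varadhan formula to $\minf{\supersam,L;U}$ with the test function $h_\alpha(\tilde z,u,l)=l_u-\alpha\sum_{i\neq u}l_i$, sets $\lambda=\log(n+1)$, and sends $\alpha\to\infty$ to eliminate a residual $\log\bigl(1+\tfrac{n}{n+1}e^{-\alpha\log(n+1)}\bigr)$ term. You instead do direct entropy accounting: $\minf{L;U\vert \supersam}=\log(n+1)-\entr{U\vert L,\supersam}$; interpolation forces $L_j=0$ for all $j\neq U$ almost surely, so $L$ determines $U$ on $\{L\neq 0\}$, whence $\entr{U\vert L,\supersam}\le \Pr[L=0]\log(n+1)$; and $\Pr[L\neq 0]=\Pr[L_U>0]\ge \EE[L_U]=\Risk{\Dist}{\Alg}$ because $L_U\in[0,1]$. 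Every step is sound, and your argument covers $[0,1]$-valued losses, matching the theorem's stated generality. One point deserves emphasis: the paper, in the remark immediately after its proof, warns against a ``seemingly more elementary argument'' that superficially resembles yours and refutes it with a thresholds counterexample. That flawed argument asserts that, conditional on $\{L=(0,\dots,0)\}$, the residual entropy of $U$ \emph{equals} $\log(n+1)$, which would yield the equality $\minf{L;U\vert \supersam}=\Risk{\Dist}{\Alg}\log(n+1)$; the counterexample shows the conditional law of $U$ on that event need not be uniform. Your argument is immune to this trap: you use only the one-sided bound $\centr{L,\supersam}{U}\le\log(n+1)$ on $\{L=0\}$, which is trivially valid and is exactly the direction needed for a \emph{lower} bound on the CMI, so the counterexample can only make your inequality strict, never false. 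As for what each approach buys: yours is elementary, makes the origin of the $\log(n+1)$ factor transparent (it is the information released when the unique nonzero coordinate of $L$ pins down $U$), and is essentially a conditional, $[0,1]$-loss strengthening of the ``more direct'' computation the paper defers to \cref{sec:grun} (where $\minf{L;U}=\Risk{\Dist}{\Alg}\log(n+1)$ is proved, but only for $0$--$1$ loss); the paper's Donsker--Varadhan template, on the other hand, carries over with a different test function and Hoeffding's lemma to the agnostic bound $\EGE\le\sqrt{2\lecmi}$, where no such support-structure argument is available.
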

\begin{remark}
By the definition of mutual information, $\lecmi\leq \entr{U\vert \supersam} \leq \log(n+1)$, since $U \indep \supersam$ and  $U$  is distributed uniformly on $[n+1]$. Therefore, the bound above is never greater than one.
\end{remark}
\begin{proof}[Proof of \cref{thm:genbound-consistent}]
Let $\Dist$, $\supersam$, $U$, and $L$ be defined as in  \cref{def:loocmi}. Introduce $\tilde U$ such that  $\tilde{U}\equaldist U$ and $\tilde{U} \indep (\supersam,L)$. By the Donsker--Varadhan variational formula \citep[Prop.~4.15]{boucheron2013concentration}, for all bounded measurable functions $h$ 
and
$\lambda\in \Reals$,
\[
\begin{split}
&\minf{\supersam,L;U} = \KL{ \Pr{(\supersam,U,L})}{\Pr{({\supersam},{L})}\otimes\Pr{(\tilde{U})}} \\
&\geq \Pr{(\supersam,U,L)}(\lambda h) - \log \big[ \big(\Pr{({\supersam},{L})}\otimes\Pr{(\tilde{U})}\big)(\exp (\lambda h))\big]\label{eq:dv-lemma2}.
\end{split}
\]
Let $\alpha \in \Reals_{+}$ be a constant. 
Consider the function $h_\alpha: \dataspace^{n+1} \times [n+1] \times [0,1]^{n+1} \to \Reals$ given by
$h_\alpha(\tilde{z},u,l) = l_{u} - \alpha \sum_{i\in [n+1],i\neq u}l_i$.
Then
\[
\begin{split}
&\Pr{(\supersam,U,L)}(h_\alpha) = \EE[ \cEE{U}{h_\alpha(U,L,\supersam)}] \\
&=\EE\big[\EE^{U}\big[ \loss(\Alg(\supersam_{-U}),\supersam_U) - \alpha\smashoperator{\sum_{i\in[n],i\neq U}} \loss(\Alg(\supersam_{-U}),\supersam_i)\big]\big]\\
&\stackrel{(a)}{=} \EE\big[\cEE{U}{\big[ \loss(\Alg(\supersam_{-U}),\supersam_U)}\big]\\
&= \EE \big[ \Risk{\Dist}{\Alg(\SS)}\big] \label{eq:dv-egeterm2}.
\end{split}
\]
Here, $(a)$ follows from the fact that $\Alg$ is an interpolating algorithm, hence, for all $i\neq U$, $\loss(\Alg(\supersam_{-U}),\supersam_i)=0$ a.s. 
Thus, by \cref{eq:dv-lemma2}, \cref{eq:dv-egeterm2}, and $I(L,\supersam;U)=\lecmi$, we obtain
\[
\begin{split}
\label{eq:dv-simplified}
\Risk{\Dist}{\Alg} &\leq \frac{\lecmi}{\lambda}  \\
&+\frac{\log \big[\Pr{({\supersam},{L})}\otimes\Pr{(\tilde{U})}(\exp (\lambda h_\alpha))\big]}{\lambda}.
\end{split}
\]
To simplify the notation, let  $\loss(i,j)=\loss(\Alg(\supersam_{-i}),\supersam_{j})$ for $i$ and $j$ in $[n+1]$. We have
\[
\begin{split}
\label{eq:term2-dv-simple1}
&\Pr{(\supersam,L)}\otimes\Pr{(\tilde{U})}(\exp (\lambda h_\alpha))\\
&\stackrel{(a)}{=}\EE \big[ \exp\big[\lambda(\loss(U,\tilde{U})-\alpha \smashoperator{\sum_{i\in [n+1],i\neq \tilde{U}}} \loss(U,i))\big]\big]\\
&= \EE \big[ \indic{U=\tilde{U}} \exp\big[\lambda(\loss(U,\tilde{U})-\alpha \smashoperator{\sum_{i\in [n+1],i\neq \tilde{U}}} \loss(U,i))\big]\\
&\qquad +\EE \big[\indic{U\neq\tilde{U}} \exp\big[\lambda(\loss(U,\tilde{U})-\alpha \smashoperator{\sum_{i\in [n+1],i\neq \tilde{U}}} \loss(U,i))\big]\\
&\stackrel{(b)}{=}\EE \big[ \indic{U=\tilde{U}}\exp(\lambda\loss(U,U)) + \indic{U\neq\tilde{U}}\exp(-\lambda \alpha\loss(U,U))\big]\\
&\stackrel{(c)}{=}\EE \big[ \frac{1}{n+1} \exp(\lambda\loss(U,U)) + \frac{n}{n+1} \exp(-\lambda \alpha \loss(U,U))\big].
\end{split}
 \]
The equality $(a)$ follows from the chain rule. 
In particular, the expectation over $\supersam$ and $L$ can be written as $\EE \cEE{\supersam,U}$. Step $(b)$ follows from the fact that for all $i\in [n+1]$ and $i\neq U$, $\loss(U,i)=0$ \as,   since the algorithm is interpolating. In $(c)$ we take the expectation with respect to $\tilde{U}$ and use the fact that $\tilde{U}$ is independent of $\supersam$, $U$, and the internal randomness of $\TheAlg$. 
Let $\lambda = \log(n+1)$. 
Consider the function $f: [0,1]\to \Reals$, where $f(x)=\frac{1}{n+1} \exp(\lambda x) + \frac{n}{n+1} \exp(-\lambda \alpha x)$. Note that $f$ is the sum of two convex functions, defined on a bounded domain. It achieves its maximum over the endpoint, i.e., $x\in \{0,1\}$. Considering this observation,  we can further upper bound \cref{eq:term2-dv-simple1} by considering the following two cases: If $\loss(U,U)=1$, then
$\frac{\exp(\lambda \loss(U,U)) + n \exp(-\lambda \alpha \loss(U,U))}{n+1} =  1 + \frac{n}{n+1}\exp(-\alpha\log(n+1))$. Otherwise, in the case that $\loss(U,U)=0$, we have 
\*[
 \frac{\exp(\lambda \loss(U,U)) + n \exp(-\lambda \alpha \loss(U,U))}{n+1}=1.
\]
Therefore,  we conclude 
\[
\label{eq:dvterm2-finalterm}
\Pr{({\supersam},{L})}\otimes\Pr{(\tilde{U})}(\exp (\lambda h_\alpha)) &\leq 1 + \frac{n}{n+1}\exp(-\alpha\log(n+1)).
\]
By \cref{eq:dv-simplified} and \cref{eq:dvterm2-finalterm}
\[
\label{eq:prefinalterm}
\Risk{\Dist}{\Alg} \leq \frac{\minf{L;U\vert \supersam } + \log \rbra[1]{1+\frac{n \exp(-\alpha \log(n+1))}{n+1}}}{\log(n+1)}.
\]
Finally, letting $\alpha \to +\infty$ in \cref{eq:prefinalterm} concludes the proof. 
\end{proof}

\begin{remark}
 The special case of zero--one loss can be proven in a more direct way. See \cref{sec:grun}.
\end{remark}
\begin{remark}
In this remark, we consider a seemingly more elementary argument to establish \cref{thm:genbound-consistent}, but show 
that the logic underlying this approach is flawed. Assume that the loss function is zero--one loss and that the algorithm is interpolating. In this setup, note that $\centr{L,\supersam}{U}=0$ almost surely on the event $L\neq (0,\dots,0)$. The reason is that the algorithm can make an error only on the test point in $\supersam$. Therefore, we can write $\minf{L;U\vert \supersam } = \entr{U \vert \supersam}-\entr{U \vert L,\supersam}
= \log(n+1) - \EE{\sbra{\centr{L,\supersam}{U } \indic{L= (0,\dots,0)}}}$. 
One's intuition might suggest that,
conditional on the event $L= (0,\dots,0)$, 
there is no mutual information between $U$ and  $(\supersam,L)$.
One might then leap to   $\EE{\sbra{\centr{L,\supersam}{U }\vert L= (0,\dots,0)}}=\log(n+1)$, from which one would reason that $\minf{L;U\vert \supersam } = \log(n+1) - \log(n+1) (1- \Risk{\Dist}{\Alg})=\Risk{\Dist}{\Alg}\log(n+1)$.

However, the intuition is flawed.
Consider the class of thresholds in one dimension. In \cref{fig:thresh}, we consider learning from $n=4$ data, when the supersample is in a neighborhood of a sequence $(z_1,\dots,z_5)$. Given the training data $\SS$, let $x^\star=\max \{x| (x,1)\in \SS\}$ if $\SS$ contains at least one point with label $1$, otherwise let $x^\star = -\infty$. Consider the learning algorithm $\Alg(\SS)=\hat{h}$ where $\hat{h}(x) = \indic{x\leq x^\star}$. On the event $L=(0,\dots,0)$ and $\supersam$ is a small perturbation of the points $(z_1,\dots,z_5)$, we have $\cPr{L,\supersam}{U=3}=0$. 
The reason is, had it been the case that $U=3$, then the learning algorithm would have erred on its prediction for (the point corresponding to) $z_3$. Therefore, $\EE{\sbra{\centr{L,\supersam}{U }\vert L= (0,\dots,0)}}\neq\log(n+1)$, in general.
\end{remark}

\begin{figure}
    \centering
    \includegraphics[scale=0.4]{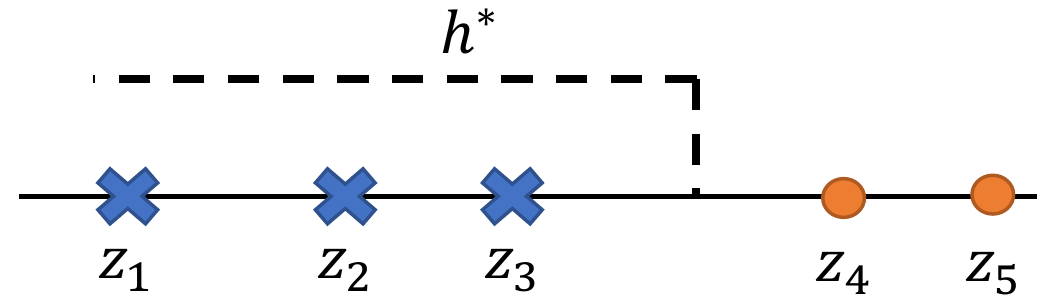}
    \caption{A counter example.}
    \label{fig:thresh}
\end{figure}

For an arbitrary learning algorithm, \loecmi still controls generalization. The proof of
the following theorem is deferred to \cref{sec:grun}.

\begin{theorem}
\label{thm:gen-agnostic}
Let $n \in \Naturals$. Assuming only that loss is bounded in $[0,1]$,
\begin{equation*}
    \EGE \leq \sqrt{2\lecmi}.
\end{equation*}
\end{theorem}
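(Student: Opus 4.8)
The plan is to collapse the expected generalization error into a single expectation over the coupled pair $(L,U)$ and then apply the standard subgaussian mutual-information bound, disintegrated over the supersample $\supersam$. Concretely, I would first rewrite $\EGE$ purely in terms of $L$ and $U$. Conditional on $U$, the held-out element $\supersam_U$ is a fresh draw from $\Dist$ that is independent of the training set $\SS=\supersam_{-U}$ (the coordinates of $\supersam$ are i.i.d.\ and $U\indep\supersam$), so $\EE[\Risk{\Dist}{\Alg(\SS)}]=\EE[L_U]$, whereas $\EmpRisk{\SS}{\Alg(\SS)}=\frac1n\sum_{i\neq U}L_i$. Writing $\phi(l,u)=l_u-\frac1n\sum_{i\neq u}l_i$ for $l\in[0,1]^{n+1}$ and $u\in[n+1]$, this gives $\EGE=\EE[\phi(L,U)]$.

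The crucial observation is that for every fixed $l$ the uniform average of $\phi(l,\cdot)$ over the held-out index vanishes: $\frac{1}{n+1}\sum_{u}\phi(l,u)=\bar l-\bar l=0$, where $\bar l=\frac{1}{n+1}\sum_i l_i$ (each $l_i$ contributes once as the held-out term and, via the $n$ indices $u\neq i$ weighted by $1/n$, once more through the training term). Since $U$ is uniform and independent of $\supersam$, this says precisely that the decoupled expectation of $\phi(L,\tilde U)$, with $\tilde U$ an independent uniform copy, is zero conditional on $\supersam$. Moreover $\phi$ takes values in $[-1,1]$, so under the product measure $\phi$ is mean-zero and bounded in a range of length $2$, hence $1$-subgaussian by Hoeffding's lemma; in particular $\EE[\exp(\lambda\phi(L,\tilde U))]\le\exp(\lambda^2/2)$.

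These are exactly the ingredients needed by the subgaussian mutual-information bound, applied conditionally on $\supersam$. I would invoke the Donsker--Varadhan variational formula used in the proof of \cref{thm:genbound-consistent} to obtain, for every $\lambda>0$, the disintegrated inequality $\EE^{\supersam}[\phi(L,U)]\le \dminf{\supersam}{L;U}/\lambda+\lambda/2$. Taking expectation over $\supersam$ turns $\EE[\dminf{\supersam}{L;U}]$ into $\minf{L;U\vert\supersam}=\lecmi$, yielding $\EGE\le \lecmi/\lambda+\lambda/2$, and optimizing at $\lambda=\sqrt{2\,\lecmi}$ gives $\EGE\le\sqrt{2\,\lecmi}$.

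The step I expect to matter most — and would be most careful about — is the mean-zero cancellation of the decoupled term: it is what eliminates any residual bias and produces the clean constant, and it is special to the leave-one-out symmetry (it would fail for an asymmetric train/test split). Everything downstream is the routine subgaussian machinery and a one-parameter optimization, so I anticipate no genuine obstacle there, beyond checking the measure-theoretic bookkeeping for randomized algorithms (the internal randomness is independent of $(\supersam,U)$, so all the conditional-independence statements go through).
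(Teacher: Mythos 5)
Your proposal is correct and is essentially the paper's own argument: the paper proves $\EGE \leq \sqrt{2\,\minf{L;U}}$ (first part of \cref{thm:tat}) using exactly your test function $h(u,l)=l_u-\tfrac1n\sum_{i\neq u}l_i$, the same mean-zero decoupling under an independent uniform copy $\tilde U$, Hoeffding's lemma with $|h|\le 1$, and the optimization $\lambda=\sqrt{2\,\minf{L;U}}$, and then deduces \cref{thm:gen-agnostic} from the inequality $\minf{L;U}\le\minf{L;U\vert\supersam}$. The only (immaterial) difference is that you run Donsker--Varadhan disintegrated over $\supersam$ and average at the end, whereas the paper works unconditionally and thereby also obtains the formally tighter unconditional bound $\sqrt{2\,\minf{L;U}}$ as a by-product.
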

\section{A Connection with leave-one-out error}
\label{sec:connect-leaveout}

In this section, we describe a connection between \loecmi and the \emph{leave-one-out error}, a well-studied statistical estimator of risk
\citep{cover1969learning,stone1976cross}.
Using the supersample $\supersam$, 
the random variable
\[
\nonumber
\loer = \frac{1}{n+1}\sum_{i=1}^{n+1} \cEE{\supersam}{\sbra{\loss(\Alg({\supersam_{-i}}),Z_i)}}
\]
is a leave-one-out error estimate for the risk of $\Alg$, where we have averaged out the internal randomness in $\Alg$. (Note that, for deterministic learning algorithms, this averaging has no effect.) 
In order to connect this quantity to \loecmi, we first note that we can bound \lecmi in terms of the disintegrated entropy:
\[
\lecmi = \EE\sbra{\centr{\supersam}{L} - \centr{\supersam,U}{L} } \leq \EE\sbra{\centr{\supersam}{L}},
\]
where the second inequality is an equality for deterministic learning algorithms.
Let $\binaryentr{\cdot}$ denote the binary entropy function.
\begin{theorem}
\label{thm:leave-one-out}
Let $\Alg$ be an interpolating learning algorithm for some data distribution $\Dist$. 
Assume loss lies in $\{0,1\}$. Then, almost surely,
\[
\centr{\supersam}{L} \leq \binaryentr{\loer}+ \loer \log(n+1).
\]
\end{theorem}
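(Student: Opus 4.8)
The plan is to use the interpolating hypothesis to determine the conditional law of $L$ given $\supersam$ \emph{exactly}, and then bound its entropy by the grouping (chain-rule) property of Shannon entropy. First I would observe that, since $\Alg$ interpolates, for each held-out index $u\in\range{n+1}$ the trained predictor $\Alg(\supersam_{-u})$ incurs zero loss on all of its training points; that is, $\loss(\Alg(\supersam_{-u}),\supersam_i)=0$ almost surely for every $i\neq u$. Hence, conditional on $\supersam$ and on $\{U=u\}$, the only coordinate of $L$ that can be nonzero is the $u$-th, so $L$ is either the all-zeros vector $(0,\dots,0)$ or the indicator vector $\mathbf{e}_u$ (equal to $1$ in coordinate $u$ and $0$ elsewhere), with $L=\mathbf{e}_u$ precisely when the learner errs on the held-out point $\supersam_u$.

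Writing $q_u \defas \cEE{\supersam}{\loss(\Alg(\supersam_{-u}),\supersam_u)} \in [0,1]$ for the internal-randomness-averaged held-out error at index $u$ (so that $\loer = \frac{1}{n+1}\sum_{u\in\range{n+1}} q_u$), this yields the explicit description
\begin{equation*}
\cPr{\supersam}{L=\mathbf{e}_u} = \frac{q_u}{n+1} \quad (u\in\range{n+1}), \qquad \cPr{\supersam}{L=(0,\dots,0)} = 1-\loer,
\end{equation*}
so the conditional law of $L$ is supported on the $n+2$ points $\{(0,\dots,0)\}\cup\{\mathbf{e}_u : u\in\range{n+1}\}$, placing total mass $\loer$ on the basis vectors.

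The entropy bound is then immediate from grouping. Partition the support into the block $\{(0,\dots,0)\}$ and the block $\{\mathbf{e}_u\}_{u}$, of masses $1-\loer$ and $\loer$. The chain rule gives
\begin{equation*}
\centr{\supersam}{L} = \binaryentr{\loer} + (1-\loer)\cdot 0 + \loer\, \entr{\nu},
\end{equation*}
where $\nu$ is the normalized conditional law on the basis vectors, $\nu(\mathbf{e}_u) = q_u / \sum_{v} q_v$. The zero term is the degenerate entropy of the first block, $\binaryentr{\loer}$ is the entropy of the coarse two-block split, and since $\nu$ is supported on $n+1$ points we have $\entr{\nu}\le \log(n+1)$. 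Combining these gives $\centr{\supersam}{L} \le \binaryentr{\loer}+\loer\log(n+1)$.

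The entire content of the argument lives in the first step: interpolation collapses the support of $L$ given $\supersam$ onto the held-out coordinate, so that reconstructing $L$ amounts to deciding whether the learner erred (contributing $\binaryentr{\loer}$) and, conditionally on an error, identifying which of the $n+1$ indices was held out (contributing at most $\log(n+1)$, with probability $\loer$). I do not expect a genuine obstacle here; the only points needing care are that the off-diagonal entries of $L$ vanish almost surely rather than merely in expectation—so the support claim is an identity, not a bound, which is also what makes the final inequality hold almost surely over $\supersam$—and the bookkeeping that $q_u$ is the internal-randomness average, so that the displayed conditional probabilities remain correct for randomized learners.
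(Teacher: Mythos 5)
Your proof is correct and follows essentially the same route as the paper's: the heart of both arguments is that interpolation collapses the support of $L$ given $\supersam$ to the all-zeros vector and the $n+1$ indicator vectors, with conditional probabilities $\frac{q_u}{n+1}$ (the paper's $\kappa_{u,u}/(n+1)$) and $1-\loer$. The only difference is cosmetic: in the final step you invoke the grouping (chain-rule) property of entropy together with the $\log(n+1)$ max-entropy bound on the normalized within-error-block distribution, whereas the paper expands $\centr{\supersam}{L}$ directly and applies the log-sum inequality --- two equivalent packagings of the same estimate, both yielding $\binaryentr{\loer}+\loer\log(n+1)$.
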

\begin{proof}
Let $\lossreal{0} = (0,\dots,0) \in \{0,1\}^{n+1}$ and, for $i \in \{1,\dots,n+1\}$, let $\lossreal{i} \in \{0,1\}^{n+1}$ be equivalent to $\lossreal{0}$ but for a 1 at index $i$.
Due to interpolation, the support of $\lossvec$ is $\{\lossreal{i}\vert i \in \{0,\dots,n+1\} \}$. For each $i,j \in \{1,\dots,n+1\}$, let 
\*[
\kappa_{i,j}
&=\cPr{\supersam,U=j}{L=\lossreal{i}}=\cPr{\supersam}{\loss(\Alg(\supersam_{-j}),\supersam_i)=1}.
\]
Since $\Alg$ is a consistent algorithm $\kappa_{i,j}=0$ \as \ for $i\neq j$.
 Also, $\EE[\kappa_{i,i}]=\Risk{\Dist}{\Alg}$. 
 For $i \in \{1,\dots,n+1\}$,
\*[
\cPr{\supersam}{L=\lossreal{i}} &=\cEE{\supersam}{\big[\cPr{\supersam,U}{L=\lossreal{i}}\big]}\\
&= \frac{1}{n+1} \sum_{j=1}^{n+1} \kappa_{i,j}\\
&=\frac{\kappa_{i,i}}{n+1} ,
\]
where the last line follows since $\kappa_{i,j}=0$ for  $i\neq j$.
Note that the leave-one-out-error satisfies
$\loer = (n+1)^{-1}\sum_{i=1}^{n+1} \kappa_{i,i}$.
Then, 
by the definition of the entropy, 
\[
\centr{\supersam}{L} 
&=  -\sum_{i=0}^{n+1} \cPr{\supersam}{L=\lossreal{i}} \log \cPr{\supersam}{L=\lossreal{i}} \nonumber\\
&= -\big(1-\loer \big) \log \big(1- \loer \big)
- \sum_{i=1}^{n+1} \frac{\kappa_{i,i}}{n+1} \log \big( \frac{\kappa_{i,i}}{n+1}\big).
\label{eq:defentr}
\]

We now invoke the log-sum inequality for non-negative sequences $\{a_i\}_{i\in [n]}$ and $\{b_i\}_{i\in [n]}$, wherein $\sum_{i=1}^{n} a_i \log \frac{a_i}{b_i}\geq (\sum_{i\in [n]}a_i) \log \frac{\sum_{i\in [n]}a_i}{\sum_{i\in [n]}b_i} $. Using this inequality, we obtain
\[
\sum_{i=1}^{n+1} \frac{\kappa_{i,i}}{n+1} \log \Big( \frac{\kappa_{i,i}}{n+1}\Big)
\geq  \loer  \log \Big( \frac {\loer }{n+1} \Big) .
\label{eq:logsumineq}
\]
Therefore, by \cref{eq:defentr,eq:logsumineq},
\[
\centr{\supersam}{L} &\leq -\big(1-\loer \big) \log \big(1-\loer \big) \nonumber\\
 &\qquad -\loer \log \loer + \loer \log(n+1) \nonumber\\
 & = \binaryentr{\loer  }+ \loer \log(n+1), \nonumber
\]
as was to be shown.
\end{proof}
As a corollary, we provide an explicit bound on \loecmi.
\begin{corollary}
\label{thm:loo-ecmi}
Let $\Alg$ be a consistent learning algorithm for zero--one valued loss, 
 let $\Dist$ be a distribution on $\dataspace$, and assume that, with probability one, $\loer \leq \frac{\theta}{n+1}$, where $\theta$ is some $\supersam$-measurable random variable in $\Reals_+$.
Then, almost surely,
\*[
\dminf{\supersam}{L;U} \leq   
\begin{cases}
  1+ \frac{\theta \log(n+1)}{n+1},   &  \text{if $\frac{\theta}{n+1}\geq \frac{1}{2}$,} \\
 \frac{2\theta \log(n+1)}{n+1} + \frac{\theta+\exp(-1)}{n+1},  & \text{otherwise.}
\end{cases}
\]
\end{corollary}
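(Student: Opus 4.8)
The plan is to combine the disintegrated entropy bound of \cref{thm:leave-one-out} with two elementary estimates on the binary entropy function, and then split into the two cases according to the size of $\theta/(n+1)$. First I would pass from mutual information to entropy: since $\dminf{\supersam}{L;U} = \centr{\supersam}{L} - \centr{\supersam,U}{L}$ almost surely and the disintegrated entropy of the discrete vector $L$ is non-negative, we have $\dminf{\supersam}{L;U} \le \centr{\supersam}{L}$ a.s. Applying \cref{thm:leave-one-out} then gives, almost surely,
\[
\dminf{\supersam}{L;U} \le \binaryentr{\loer} + \loer \log(n+1) = g(\loer),
\]
where $g(x) = \binaryentr{x} + x\log(n+1)$. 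It then remains to bound $g(\loer)$ on the event $\{\loer \le \theta/(n+1)\}$, working pointwise in $\supersam$ (and hence in the $\supersam$-measurable quantity $\theta$).

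For the first case, $\theta/(n+1) \ge 1/2$, I would use only the two crude estimates $\binaryentr{\loer} \le 1$ (the binary entropy never exceeds $\log 2 \le 1$) and $\loer \log(n+1) \le \tfrac{\theta}{n+1}\log(n+1)$, the latter valid because the linear term is increasing in $\loer$ and $\loer \le \theta/(n+1)$. Summing yields precisely $1 + \tfrac{\theta\log(n+1)}{n+1}$, the claimed bound.

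For the second case, $\theta/(n+1) < 1/2$, the constraint forces $\loer < 1/2$, where $\binaryentr{\cdot}$ is increasing; so I would replace $\loer$ by $q := \theta/(n+1)$ to get $g(\loer) \le \binaryentr{q} + q\log(n+1)$. The crux is to show $\binaryentr{q} \le q\log(n+1) + q + \tfrac{e^{-1}}{n+1}$. I would split $\binaryentr{q} = -q\log q - (1-q)\log(1-q)$ and bound each piece separately: for the second term, the inequality $\log\tfrac{1}{1-q} \le \tfrac{q}{1-q}$ (a consequence of $\log t \le t-1$) gives $-(1-q)\log(1-q) \le q$; for the first term, substituting $q = \theta/(n+1)$ gives $-q\log q = q\log(n+1) - q\log\theta$, and since $-\theta\log\theta \le e^{-1}$ we get $-q\log\theta = \tfrac{1}{n+1}(-\theta\log\theta) \le \tfrac{e^{-1}}{n+1}$. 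Adding these two estimates, and then adding back the $q\log(n+1)$ coming from the linear term of $g$, produces exactly $\tfrac{2\theta\log(n+1)}{n+1} + \tfrac{\theta + e^{-1}}{n+1}$.

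I expect the main obstacle to be the bookkeeping in the second case: specifically, spotting the decomposition $-q\log q = q\log(n+1) - q\log\theta$, which trades the awkward $\log q$ for a clean $\log(n+1)$ term plus the universally bounded quantity $-\theta\log\theta \le e^{-1}$, and confirming the monotonicity of $\binaryentr{\cdot}$ on $[0,1/2]$ so that the reduction from $\loer$ to $q$ is legitimate. Everything else is routine once these two scalar estimates are established.
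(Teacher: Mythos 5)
Your proposal is correct and follows essentially the same route as the paper: reduce to $\centr{\supersam}{L}$, apply \cref{thm:leave-one-out}, handle the case $\theta/(n+1)\ge 1/2$ by bounding the binary entropy by one, and in the other case use monotonicity of $\binaryentr{\cdot}$ on $[0,1/2]$ together with the decomposition $-q\log q = q\log(n+1) - q\log\theta$ and $-\theta\log\theta \le e^{-1}$. The only cosmetic difference is that you derive the inequality $\binaryentr{x}\le -x\log x + x$ from $\log t \le t-1$, whereas the paper simply cites it as well known.
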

\begin{proof}
For the case $2\theta \geq n+1$, upper-bounding the $\binaryentr{\loer}$ by one, we obtain the result. For the case $2\theta < n+1$, note that $\binaryentr{\loer} \leq \binaryentr{\frac{\theta}{n+1}}$. Using the well-known inequality $\binaryentr{x}\leq -x\log(x)+x$, we obtain 
\[
\nonumber
\dminf{\supersam}{L;U} &\leq \frac{\theta}{n+1}\log \frac{n+1}{\theta} + \frac{\theta}{n+1} + \frac{\theta}{n+1} \log(n+1)\\
&= \frac{\theta}{n+1}(\log(n+1) - \log(\theta))  +\frac{\theta}{n+1}  +\frac{\theta \log(n+1)}{n+1} \nonumber\\
&=\frac{2\theta \log(n+1)}{n+1} - \frac{\theta \log(\theta)}{n+1} +\frac{\theta}{n+1}  \nonumber\\
&\leq  \frac{2\theta \log(n+1)}{n+1} + \frac{\exp(-1)}{n+1} +\frac{\theta}{n+1} , \nonumber
\]
where the last line follows from $\max_{x>0} -x\log(x)=\exp(-1)$.
\end{proof}

 \cref{thm:loo-ecmi} can be used to obtain risk bound for a variety of consistent learning algorithms. For example, the Support Vector Machine (SVM) algorithm has a leave-one-out error guarantee in the sense of \cref{thm:loo-ecmi} for realizable distributions, where $\theta$ is given by the number $N_{\text{SV}}(\supersam)$ of support vectors in the supersample $\supersam$ \citep{mohri2018foundations}. Therefore, assuming $\Dist$ is a realizable distribution with respect to the class of half-spaces in $\Reals^d$, 
 the \loecmi of the SVM satisfies
\begin{equation*}
 \lecmi \leq \frac{\EE[N_{\text{SV}}(\supersam)]}{n+1}(2\log(n+1)+1).
\end{equation*}
Combining this result with \cref{thm:genbound-consistent} yields a bound on the risk of SVM that is optimal up to a constant factor \citep{long2020complexity}.

\subsection{Universality of \loecmi}

\label{sec:universality}
In this section, we demonstrate that leave-one-out CMI captures the asymptotics of risk for consistent learners.  More precisely,  for every data distribution  $\Dist$ and consistent learner $\TheAlg$, the quantity $\lecmi/ \log(n+1)$ vanishes as the number training samples $n$ diverges if and only if the risk also vanishes.
Also, for a broad class of learning algorithms for which the population risk converges to zero polynomially in the size of the training set,  $\lecmi/ \log(n+1)$ vanishes at the same rate as $\Risk{\Dist}{\Alg}$. 
\begin{theorem}
\label{thm:univ}
Let $\Alg$ be an interpolating learning algorithm for some data distribution $\Dist$. 
Assume loss lies in $\{0,1\}$. Then, 
\[
\label{univ-upperbound}
\lecmi \leq \binaryentr{\Risk{\Dist}{\Alg}} + \Risk{\Dist}{\Alg} \log (n+1)
\]
and
\[
\label{univ-lowerbound}
\Risk{\Dist}{\Alg} \log(n+1) \leq \lecmi.
\]
\end{theorem}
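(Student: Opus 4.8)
The plan is to establish the two inequalities \eqref{univ-upperbound} and \eqref{univ-lowerbound} by separate arguments. The upper bound is essentially a corollary of \cref{thm:leave-one-out} after passing to expectations, while the lower bound is proved directly by decomposing the conditional entropy of $U$ and exploiting the identifiability structure of interpolating algorithms.

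For \eqref{univ-upperbound}, I would begin from the inequality $\lecmi \leq \EE\sbra{\centr{\supersam}{L}}$ recorded immediately before \cref{thm:leave-one-out}, and take expectations in the almost-sure bound $\centr{\supersam}{L} \leq \binaryentr{\loer} + \loer\log(n+1)$ furnished by that theorem. This gives $\lecmi \leq \EE\sbra{\binaryentr{\loer}} + \log(n+1)\,\EE\sbra{\loer}$. Two elementary facts then close the gap. First, $\EE\sbra{\loer} = \Risk{\Dist}{\Alg}$: this is immediate from $\loer = (n+1)^{-1}\sum_{i} \kappa_{i,i}$ together with $\EE\sbra{\kappa_{i,i}} = \Risk{\Dist}{\Alg}$, both established in the proof of \cref{thm:leave-one-out}. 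Second, since the binary entropy $\binaryentr{\cdot}$ is concave, Jensen's inequality yields $\EE\sbra{\binaryentr{\loer}} \leq \binaryentr{\EE\sbra{\loer}} = \binaryentr{\Risk{\Dist}{\Alg}}$. Substituting both estimates produces exactly \eqref{univ-upperbound}.

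For \eqref{univ-lowerbound}, I would use the decomposition $\lecmi = \entr{U\vert\supersam} - \entr{U\vert L,\supersam}$. Because $U \indep \supersam$ is uniform on $[n+1]$, the first term equals $\log(n+1)$, so it suffices to show $\entr{U\vert L,\supersam} \leq (1-\Risk{\Dist}{\Alg})\log(n+1)$. The key observation, already present in the proof of \cref{thm:leave-one-out}, is that conditional on $\supersam$ the vector $L$ is supported on $\set{\lossreal{0},\dots,\lossreal{n+1}}$, and on every event $\set{L = \lossreal{i}}$ with $i \geq 1$ the held-out index must be $U = i$ --- since $\kappa_{i,j} = 0$ for $i \neq j$ by interpolation, an error on $\supersam_i$ reveals that $\supersam_i$ was the test point. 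Hence the conditional entropy of $U$ contributes zero on all such events, and only the error-free event $\set{L = \lossreal{0}}$ can carry entropy, there at most $\log(n+1)$. Since $\Pr\sbra{L = \lossreal{0}} = \EE\sbra{1-\loer} = 1 - \Risk{\Dist}{\Alg}$, we obtain $\entr{U\vert L,\supersam} \leq (1-\Risk{\Dist}{\Alg})\log(n+1)$, and \eqref{univ-lowerbound} follows. The same reasoning applied pointwise in $\supersam$ in fact gives the stronger almost-sure bound $\dminf{\supersam}{L;U} \geq \loer\log(n+1)$.

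I do not anticipate a serious obstacle, since \cref{thm:leave-one-out} carries most of the weight. The point requiring the most care is the entropy bookkeeping in the lower bound: one must verify that the identifiability of $U$ on the error events makes the associated entropy terms vanish \emph{exactly}, and correctly account for the probability $1-\Risk{\Dist}{\Alg}$ of the error-free event when averaging over $\supersam$. A minor additional check is that the Jensen step in the upper bound is applied to $\loer$ as a genuine $\supersam$-measurable random variable rather than to its realizations.
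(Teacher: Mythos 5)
Your proposal is correct, and the upper bound \eqref{univ-upperbound} is proved exactly as in the paper: take expectations in \cref{thm:leave-one-out}, use $\EE[\loer]=\Risk{\Dist}{\Alg}$, and apply Jensen's inequality to the concave binary entropy, together with $\lecmi \le \EE\sbra{\centr{\supersam}{L}}$. Where you genuinely depart from the paper is the lower bound \eqref{univ-lowerbound}: the paper simply invokes \cref{thm:genbound-consistent}, whose proof goes through the Donsker--Varadhan variational formula with an auxiliary function $h_\alpha$ and a limit $\alpha\to\infty$, whereas you give a direct entropy-accounting argument from the decomposition $\lecmi = \entr{U\vert\supersam}-\entr{U\vert L,\supersam}$, using that an observed error identifies the held-out index ($\centr{L,\supersam}{U}=0$ a.s.\ on $\set{L\neq \lossreal{0}}$) and that $\Pr\sbra{L=\lossreal{0}}=1-\Risk{\Dist}{\Alg}$. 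This is sound, and it is worth noting its relation to the paper's own cautionary remark following \cref{thm:genbound-consistent}: there the authors show that the \emph{equality} version of this elementary argument is flawed, because on the error-free event the conditional entropy of $U$ can be strictly below $\log(n+1)$ (some indices can be ruled out even without errors). You only use the one-sided bound $\centr{L,\supersam}{U}\le\log(n+1)$ on that event, and the defect goes in the favorable direction for a lower bound on $\lecmi$, so your argument is the ``repaired'' elementary proof. What each approach buys: the paper's route reuses machinery (\cref{thm:genbound-consistent}) that applies to general $[0,1]$-valued losses, while yours is self-contained, avoids Donsker--Varadhan and the limiting argument entirely, and yields the pointwise strengthening $\dminf{\supersam}{L;U}\ge \loer\log(n+1)$ almost surely, which recovers \eqref{univ-lowerbound} upon taking expectations.
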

\begin{proof}
Since the binary entropy function is concave, it follows from Jensen's inequality, \cref{thm:leave-one-out}, and the identity 
$\EE[\loer]=\Risk{\Dist}{\Alg}$ that
\*[
\entr{L\vert \supersam} &= \EE[\centr{\supersam}{L}]\\
				&\leq 	\EE\big[ \binaryentr{\loer}+ \loer \log(n+1)\big]\\
				&\leq \binaryentr{\Risk{\Dist}{\Alg}} + \Risk{\Dist}{\Alg} \log (n+1),
\]
which was to be shown. Finally, the lower bound (\cref{univ-lowerbound}) is \cref{thm:genbound-consistent}.
\end{proof}
\begin{remark}
\label{rem:poly-decay}
Are these bounds tight for consistent learners in the large $n$ limit? 
First consider the case that the  risk of $\Alg$ does not converge to zero as $n$ diverges, i.e.,  $\Risk{\Dist}{\Alg}=\Theta(1)$. 
Then $\lecmi/\log(n+1)=\Theta(\Risk{\Dist}{\Alg})$. 
For consistent learning algorithms such that $\Risk{\Dist}{\Alg} = c \frac{\log(n)^\alpha}{n^\beta} $ 
where $c$, $\alpha$ and $\beta$ 
are some non-negative constants, 
we claim that $\lecmi/\log(n+1)=\Theta(\Risk{\Dist}{\Alg})$. This claim follows by bounding \cref{univ-upperbound} using the well-known inequality $\binaryentr{p}\leq -p\log(p)+p$ for $p \in [0,1]$.
\end{remark}
\begin{remark}
\label{rem:seperation}
Let $\inspace=[0,1]$ and $\outspace=\{0,1\}$.
The class of (right-continuous) one-dimensional thresholds over $\inspace$ is $\HS=\{h_\theta\vert \theta\in \inspace\}$ where $h_\theta(x)=\mathds{1}[x> \theta]$. Consider a continuous realizable distribution $D$ for $\HS$ with positive margin, i.e., the data labelled 0 and 1 are separated by an interval. 
For any proper algorithm, we may write $\TheAlg(\SS) = h_{\hat \theta}$ for some random variable $\hat{\theta}$ in $\inspace$. Using the margin assumption, we can design $\TheAlg$ so that (1) $h_{\hat{\theta}}$ achieves zero training error yet (2) the representation of $\hat{\theta}$ encodes the whole training set, in the sense that, having $\hat{\theta}$ and $\supersam$, we can decode $U$ perfectly. For this algorithm, we then have $\minf{\TheAlg(\SS);U\vert \supersam}=\log(n+1)$. On the other hand, for this class, \citep{hanneke2016refined} showed that any algorithm with zero training error achieves $\Risk{\Dist}{\Alg}=O(1/n)$. Therefore, our result in \cref{thm:univ} shows that $\minf{L;U\vert \supersam}/\log(n+1)=O(1/n)$. This example separates  $\minf{L;U\vert \supersam}$ and $\minf{\TheAlg(\SS);U\vert \supersam}$, and served as motivation for \cref{def:loocmi}. As we discuss elsewhere, if sufficiently tight control on generalization can be obtained via a formally looser bound, doing so yields am stronger explanation. 
\end{remark}
\section{An Optimal Bound for Learning VC classes}
\label{sec:appl}
We start this section with some standard definitions  \citep{shalev2014understanding}. Consider binary classification, i.e., $\outspace=\{0,1\}$, with zero--one loss. 
A sequence $((x_1,y_1),\dots,(x_n,y_n))$ is said to be \defn{realizable by $\HS$}, if for some $h \in \HS$, $h(x_i)=y_i$ for all $i \in \range{n} = \{1,\dots,n\}$. Note that, if $\Dist$ is realizable by $\HS$, then, for all $n \in \Naturals$, the sequence $\SS \sim \Dist^n$ is a.s.\ realizable by $\HS$. We say $\HS$ \emph{shatters} $(x_1,\dots,x_m)\in \inspace^m$ if for all $(y_1,\dots,y_m)\in \{0,1\}^m$, there exists $h \in \HS$, such that, for all $i\in \range{m}$, we have $h(x_i)=y_i$. 
The \emph{VC dimension} of $\HS$, denoted $\vcd$, is the supremum of integers $m \ge 0$ for which there exists $(x_1,\dots,x_{m})\in \inspace^{m}$ shattered by $\HS$. In particular, if there is no largest such integer, the VC dimension is infinite, i.e., $\vcd = \infty$.

In this section, we study the \loecmi of the classical one-inclusion graph algorithm, which was first proposed  by \citet{haussler1994predicting} as a general-purpose transductive learner for VC classes in the realizable setting. Here, we provide a brief description of this algorithm. Let $\HS$ be a class with a bounded VC dimension. Assume a realizable sequence of $n$ labeled samples $\SS=((x_1,y_1),\dots,(x_n,y_n))\in \dataspace^n$ and a test point $x_{n+1}$ are given to the learner, and the learner is tasked to predict the label of $x_{n+1}$. 
Let $V$ be the set of all possible labelings of $(x_1,\dots,x_{n+1})$ by the classifiers in $\HS$, i.e., 
$V=\{(v_1,\dots,v_{n+1}) \in \{0,1\}^{n+1} \mid \exists h \in \HS,\ \forall i\in [n+1], h(x_i)=v_i  \}$.
The \emph{one-inclusion graph} \citep{haussler1994predicting}
is the graph with vertex set $V$ such that vertices $\vec{v},\vec{w} \in V$ are connected by an edge if and only if the Hamming distance between $\vec{v}$ and $\vec{w}$ is one.
A \emph{probability assignment} is a function $P: V \times V \to [0,1]$ such that (i) $P(\vec{v},\vec{w}) > 0$  only if $\vec{v}$ and $\vec{w}$ are adjacent 
(in particular, $P(\vec{v},\vec{v})=0$ for all $\vec{v} \in V$) and (ii) given two adjacent vertices $\vec{v}$ and $\vec{w}$, we have $P(\vec{v},\vec{w}) + P(\vec{w},\vec{v})=1$.
We assume that $P$ is chosen based only on $(x_1,\dots,x_{n+1})$, i.e., independently of the labels.

We say a vertex $\vec{v}=(v_1,\dots,v_{n+1}) \in V$ is consistent with the labels in $\SS$ if $(v_1,\dots,v_n)=(y_1,\dots,y_n)$. Since the labels of $(x_1,\dots,x_n)$ are known, \citet{haussler1994predicting} observed that at most \emph{two} vertices in the one-inclusion graph are consistent with the labels in $\SS$. 
In the case that only vertex $\vec{v}=(v_1,\dots,v_{n+1}) \in V$ is consistent with $\SS$, the label of $x_{n+1}$ is predicted as $v_{n+1}$. 
In the case that two vertices $\vec{v}=(v_1,\dots,v_{n+1})$ and $\vec{w}=(w_1,\dots,w_{n+1})$ are consistent with $\SS$, they differ only on the $(n+1)$-th position and the algorithm uses the probability assignment $P$ to predict that the label for $x_{n+1}$ agrees with $v_{n+1}$ with probability $P(\vec{v},\vec{w})$ and agrees with $w_{n+1}$ otherwise. 

Consider a realizable distribution $\Dist$ and let $\targetfun \in \HS$ denote a function that determines the labels. \citet{haussler1994predicting} prove that the leave-one-out error of the one-inclusion graph algorithm can be expressed in terms of the probability assignment $P$ as
\[
\nonumber
\loer = \frac{\sum_{\vec{v} \in V}{P(\vec{v^\star},\vec{v})}}{(n+1)},
\]
where
$\vec{v^\star}=(\targetfun(X_1),\dots,\targetfun(X_{n+1})) \in V$ is the vertex corresponding to $\targetfun$ in the one-inclusion graph of $(X_1,\dots,X_{n+1})$. 
Moreover, \citet{haussler1994predicting} prove that there exists a probability assignment $P$ such that $\sum_{\vec{v}\in V}{P(\vec{w},\vec{v})}\leq \vcd$ uniformly over all $\vec{w} \in V$. By combining this result and \cref{thm:one-inclusion-bound}, we obtain the main result of this section.

\begin{theorem}
\label{thm:one-inclusion-bound}
Let $\TheAlg$ denote the one-inclusion graph algorithm. Then, for every VC class $\HS$ with dimension $\vcd$, every data distribution $\Dist$ realizable by $\HS$, and $n\geq \vcd$, there exists a probability assignment for the one-inclusion graph algorithm such that   
$\lecmi \leq \frac{\vcd}{n+1}(2\log(n+1)+1)$. Combining this results with the generalization bound in \cref{thm:genbound-consistent} yields expected risk  $\frac{2\vcd}{n+1}(1+o(1))$  which is optimal up to a constant factor \citep{li2001one}. 
\end{theorem}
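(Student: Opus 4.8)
The plan is to reduce the theorem to the two facts about the one-inclusion graph algorithm quoted just above its statement, and then feed the resulting leave-one-out bound into the machinery of \cref{sec:connect-leaveout}. First I would fix the realizable distribution $\Dist$ and a target $\targetfun \in \HS$, and choose the probability assignment $P$ guaranteed by \citet{haussler1994predicting}, i.e.\ one satisfying $\sum_{\vec v \in V} P(\vec w, \vec v) \le \vcd$ uniformly in $\vec w$. Applying this bound at the vertex $\vec w = \vec{v^\star}$ in Haussler's leave-one-out identity $\loer = (n+1)^{-1}\sum_{\vec v \in V} P(\vec{v^\star}, \vec v)$ yields, almost surely,
\[
\loer \le \frac{\vcd}{n+1}. \nonumber
\]
I would also record that the algorithm fits \cref{def:loocmi} and is interpolating: any vertex consistent with $\SS$ agrees with the labels on the $n$ retained coordinates, so the loss vanishes there and only the held-out coordinate can be misclassified. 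Hence the results of \cref{sec:connect-leaveout} apply.

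Second, I would convert this leave-one-out bound into the claimed bound on \loecmi. The cleanest route is through the expected risk: since $\EE[\loer] = \Risk{\Dist}{\Alg}$, the almost-sure bound above gives $\Risk{\Dist}{\Alg} \le \vcd/(n+1)$, and \cref{thm:univ} then yields $\lecmi \le \binaryentr{\Risk{\Dist}{\Alg}} + \Risk{\Dist}{\Alg}\log(n+1)$. Using $\binaryentr{p} \le -p\log p + p$ and the fact that $p \mapsto p\log\frac{n+1}{p} + p$ is increasing on $(0, n+1)$ (its derivative is $\log\frac{n+1}{p} > 0$), I would substitute $p = \vcd/(n+1) \le \Risk{\Dist}{\Alg}$'s upper bound to get
\[
\lecmi \le \frac{\vcd}{n+1}\bigl(2\log(n+1) - \log \vcd\bigr) + \frac{\vcd}{n+1}. \nonumber
\]
Since $\vcd \ge 1$ forces $-\log \vcd \le 0$ (the case $\vcd = 0$ is trivial), the middle term is nonpositive and I recover exactly $\lecmi \le \frac{\vcd}{n+1}(2\log(n+1)+1)$. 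Equivalently one may invoke \cref{thm:loo-ecmi} with $\theta = \vcd$; that route is slightly lossier, leaving a spurious $\exp(-1)/(n+1)$ term that the $-\log\vcd$ saving above removes.

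Finally, for the risk statement I would simply combine the displayed \loecmi bound with \cref{thm:genbound-consistent}, obtaining
\[
\Risk{\Dist}{\Alg} \le \frac{\lecmi}{\log(n+1)} \le \frac{\vcd}{n+1}\Bigl(2 + \frac{1}{\log(n+1)}\Bigr) = \frac{2\vcd}{n+1}\bigl(1 + o(1)\bigr), \nonumber
\]
and then cite the matching $\Omega(\vcd/n)$ lower bound of \citep{li2001one} for the optimality claim.

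I expect the only real obstacle to be bookkeeping rather than mathematics: the substantive combinatorial content — the existence of a probability assignment with out-degree at most $\vcd$, which is Haussler's density/orientation bound for the one-inclusion graph — is quoted and may be assumed. The points that genuinely need care are (i) confirming that the transductive algorithm is interpolating on the retained coordinates so that \cref{thm:univ} is applicable, and (ii) securing the clean constant by routing through \cref{thm:univ} together with $\binaryentr{p}\le -p\log p + p$ and the monotonicity/sign checks, rather than through the black-box \cref{thm:loo-ecmi}.
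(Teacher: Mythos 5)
Your proposal is correct and takes essentially the same route as the paper: both reduce the theorem to Haussler's leave-one-out identity plus the existence of a probability assignment with out-degree at most $\vcd$, giving $\loer \le \vcd/(n+1)$ almost surely, and then feed this into the machinery of \cref{sec:connect-leaveout} (the paper invokes \cref{thm:loo-ecmi} with $\theta=\vcd$, while you route through \cref{thm:univ} and a monotonicity argument; both rest on \cref{thm:leave-one-out}). Your care with the constant is warranted rather than pedantic: applied as a black box, \cref{thm:loo-ecmi} leaves an extra $\exp(-1)/(n+1)$ term, so the stated bound $\frac{\vcd}{n+1}(2\log(n+1)+1)$ indeed requires the observation that $\vcd \ge 1$ makes the $-\log\vcd$ contribution nonpositive, exactly as you supply.
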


Using our general reduction (\cref{thm:leave-one-out}), we have characterized the \loecmi of the one-inclusion graph algorithm and, as a consequence, shown that the \loecmi framework provides an \emph{optimal} bound for learning VC classes. It is worth noting that the IOMI framework of \citet{RussoZou16,XuRaginsky2017} is provably \emph{unable} to characterize the learnablity of VC classes \citep{roishay}, and, at present,
the best known bound within the CMI framework \citep{steinke2020reasoning}  
is 
\emph{suboptimal} by a $\log(n)$ factor \citep{steinke2020openproblem}.

 \begin{figure}
    \centering
    \includegraphics[scale=0.50]{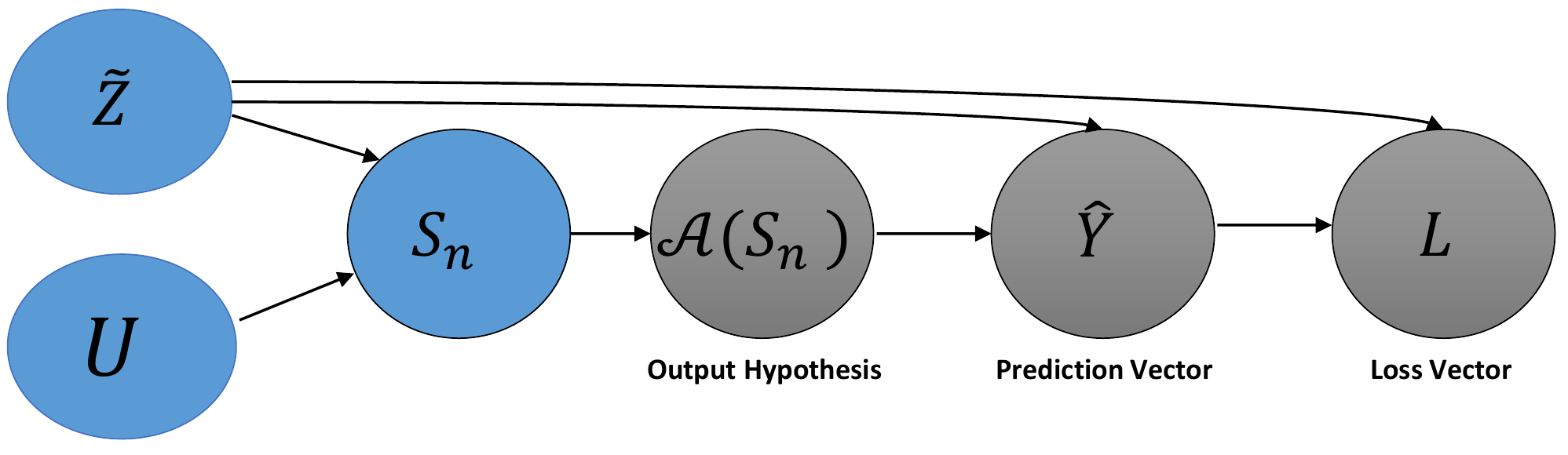}
    \caption{Conditional independence relationships encoded as a graphical model.}
    \label{fig:learning-problem-RV}
\end{figure}

\section{A Hierarchy of Measures of Information }
\label{sec:grun}
We have introduced leave-one-out CMI and shown that it can be used to bound the expected generalization error of learning algorithms. 
In this section, we aim to relate leave-one-out CMI to other measures of mutual information, 
some of which have already been shown to control generalization. 

To begin, we place leave-one-out CMI in a chain of inequalities:
\[
\begin{split}
\minf{L;U} 
&\stackrel{(a)}{\leq} \minf{L;U\vert \supersam} \stackrel{(b)}{\leq} \minf{\hat{Y};U\vert \supersam} 
\stackrel{(c)}{\leq} \minf{\Alg(\SS);U\vert \supersam} \\
&\stackrel{(d)}{\leq} \minf{\Alg(\SS);\SS}
\end{split}
\label{eq:diff-depmsr}.
\]

\cref{fig:learning-problem-RV} presents the conditional independence relationsihps that hold among the various random variables we have introduced. 
In the chain of inequalities, $(a)$ follows from the fact that $U \indep \supersam$; $(b),(c)$ follow from the data-processing inequality; and $(d)$ is shown by \citet{haghifam2020sharpened}. 
Except for the first quantity, $\minf{L;U}$, 
each quantity, or some close analogue, has been studied in the literature. For example, $\minf{\hat{Y};U\vert \supersam}$ was studied by \citet{harutyunyan2021information} in the context of CMI with a supersample of size $2n$. \NA{Indeed, the same sequence of inequalities hold for the analogous $2n$-supersample quantities in the CMI framework.}

We now show that $\minf{L;U}$ exactly determines the risk (equivalently, expected generalization error) of consistent algorithms for $0$--$1$ loss. For bounded loss functions, $\minf{L;U}$ upper-bounds the generalization error of arbitrary learning algorithms.
\begin{theorem}
\label{thm:tat}
Let $\Dist$ be a  distribution on $\dataspace$ and let $\Alg$ be a learning algorithm for $n \in \Naturals$ i.i.d.\ data.
For any $[0,1]$-bounded loss function,
\*[
\EGE \leq \sqrt{2 \minf{L;U}}.
\]
If $\Alg$ is almost surely interpolating,
then, for zero--one loss, 
\*[
\Risk{\Dist}{\Alg} = \frac{\minf{L;U}}{\log(n+1)}. 
\]
\end{theorem}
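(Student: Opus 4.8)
The plan is to treat the two statements separately. The inequality $\EGE \le \sqrt{2\minf{L;U}}$ follows the Donsker--Varadhan template of \cref{thm:genbound-consistent}, but with a symmetrized ``test-minus-train'' functional in place of the interpolation-specific one; the exact identity for interpolating algorithms follows by directly reading off $\minf{L;U} = \entr{U} - \entr{U \mid L}$ from the one-hot structure that interpolation forces on $L$.

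For the generalization bound, I would introduce a decoupled copy $\tilde U \equaldist U$ with $\tilde U \indep L$ and work with
\[
g(L,U) = L_U - \frac{1}{n}\sum_{i \in [n+1],\, i \neq U} L_i ,
\]
the held-out loss minus the average of the remaining (training) losses. Under the true joint law, $\supersam_U \indep \supersam_{-U}$ gives $\EE[L_U] = \Risk{\Dist}{\Alg}$ while $\tfrac1n\sum_{i \neq U} L_i = \EmpRisk{\SS}{\Alg(\SS)}$, so that $\EE[g(L,U)] = \EGE$. The Donsker--Varadhan inequality then yields, for every $\lambda > 0$,
\[
\EGE \le \frac{\minf{L;U} + \log \EEE{L,\tilde U}\sbra{\exp\rbra{\lambda\, g(L,\tilde U)}}}{\lambda}.
\]
The crux is that under the product measure the functional has conditional mean zero: averaging $\tilde U$ uniformly over $[n+1]$ gives both $\EEE{\tilde U}[L_{\tilde U} \mid L] = \tfrac{1}{n+1}\sum_j L_j$ and $\EEE{\tilde U}[\tfrac1n\sum_{i \neq \tilde U} L_i \mid L] = \tfrac{1}{n+1}\sum_j L_j$, which cancel. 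Since each $L_i \in [0,1]$, the variable $g(L,\tilde U)$ lies in $[-1,1]$, so Hoeffding's lemma gives $\log \EEE{\tilde U}[\exp(\lambda g) \mid L] \le \lambda^2/2$; taking $\lambda = \sqrt{2\minf{L;U}}$ yields the claim. Because $U \indep \supersam$ implies $\minf{L;U} \le \minf{L;U\mid\supersam} = \lecmi$, this simultaneously establishes \cref{thm:gen-agnostic}.

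For the identity, interpolation under $0$--$1$ loss forces $L_i = 0$ almost surely for every $i \neq U$, so $L$ is supported on the vectors $\lossreal{0},\lossreal{1},\dots,\lossreal{n+1}$. Writing $p = \Risk{\Dist}{\Alg} = \EE[L_U]$, I would compute the joint law of $(L,U)$: for $i \ge 1$ the event $\{L = \lossreal{i}\}$ coincides with $\{U = i,\ L_U = 1\}$, and since $(\supersam_{-i},\supersam_i) \equaldist \Dist^n \otimes \Dist$ for every $i$, the probability of a held-out error is the same constant $p$ regardless of which index is dropped. Hence $\Pr[U=i,\, L=\lossreal{0}] = \tfrac{1-p}{n+1}$ and $\Pr[U=i,\, L=\lossreal{i}] = \tfrac{p}{n+1}$, so conditioned on $L = \lossreal{0}$ the index $U$ is uniform on $[n+1]$, while conditioned on $L = \lossreal{i}$ ($i \ge 1$) it is the point mass at $i$. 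Therefore $\entr{U \mid L} = (1-p)\log(n+1)$ and $\minf{L;U} = \entr{U} - \entr{U\mid L} = \log(n+1) - (1-p)\log(n+1) = p\log(n+1)$, which is the stated identity after dividing by $\log(n+1)$.

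The obstacle here is bookkeeping rather than depth. For the bound, the only non-mechanical point is verifying the conditional-mean-zero property of $g(L,\tilde U)$ under decoupling, which is precisely what makes the sub-Gaussian estimate applicable; choosing this symmetrized functional (rather than $L_U$ alone) is the key idea. For the identity, the delicate step is justifying that $U$ is conditionally uniform given $L = \lossreal{0}$, which hinges on the exchangeability consequence $(\supersam_{-i},\supersam_i) \equaldist \Dist^n \otimes \Dist$ ensuring the per-index error probability is independent of $i$; once the one-hot support of $L$ is identified, the entropy computation is immediate.
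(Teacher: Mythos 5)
Your proposal is correct and follows essentially the same route as the paper: the generalization bound is proved by the identical Donsker--Varadhan argument (your $g(L,U)$ is exactly the paper's $h(u,l)=\sum_{i}\rho_i(u)\,l_i$, with the same decoupled $\tilde U$, the same mean-zero observation under the product measure, Hoeffding's lemma with $|g|\leq 1$, and the choice $\lambda=\sqrt{2\minf{L;U}}$). The only deviation is in the interpolation identity: the paper expands $\minf{L;U}=\entr{L}-\entr{L\mid U}$ and computes both entropies from the one-hot support of $L$, whereas you expand $\minf{L;U}=\entr{U}-\entr{U\mid L}$ and use that $U$ given $L$ is either uniform (when $L$ is the zero vector) or a point mass; this is a cosmetic variation resting on the same exchangeability fact that the per-index held-out error probability equals $\Risk{\Dist}{\Alg}$, and it is arguably slightly cleaner since it avoids the $-x\log x$ bookkeeping.
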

\begin{proof}
Consider a $[0,1]$-bounded loss.
For all $i \in [n+1]$, 
let $\rho_i:[n+1]\to \Reals$ be
\begin{equation*}
 \rho_i(j)= 
\begin{cases} 
1 &\mbox{if } j = i, \\
-\frac{1}{n} & \mbox{otherwise. }  \end{cases}   
\end{equation*}

Let $\Dist$, $\supersam$, $U$,  $L$, and $\tilde{U}$ be as in the proof of \cref{thm:genbound-consistent}. 
By the Donsker--Varadhan variational formula \citep[Prop.~4.15]{boucheron2013concentration}, for all bounded measurable functions $h$ and for all $\lambda\in \Reals$
\[
\begin{split}
\label{eq:dv-lemma1}
&\minf{L;U} = \KL{ \Pr{(U,L})}{\Pr{({L})}\otimes\Pr{(\tilde{U})}} \\
&\geq \Pr{(U,L)}(\lambda h) - \log \big[\big(\Pr{({L})}\otimes\Pr{(\tilde{U})}\big)(\exp (\lambda h))\big].
\end{split}
\]
Consider now the function $h: [n+1] \times [0,1]^{n+1} \to [-1,1]$  given by $h(u,l) = \sum_{i=1}^{n+1}\rho_i(u) l_i$.
Then
\[
&\Pr{(U,L)}(h) = \EE[ \cEE{U}{[h(U,L)]}] \nonumber\\
&=\EE\big[\cEE{U}{\big[ \ell(\Alg(\supersam_{-U}),\supersam_U) - \frac{1}{n}\sum_{i\in[n],i\neq U} \ell(\Alg(\supersam_{-U}),\supersam_i)\big]}\big] \nonumber\\
&= \EE \big[ \Risk{\Dist}{\Alg(\SS)}-\EmpRisk{\SS}{\Alg(\SS)} \big] \nonumber\\
& = \EGE .
\label{eq:dv-egeterm}
\]
Moreover, for all $i \in [n+1]$, we have $\EE[\rho_i(\tilde{U})]=0$. Therefore $\cEE{L}{h(\tilde{U},L)}=0$ since  $\tilde{U} \indep L$. Using Hoeffding's lemma and the fact that $|h|\leq 1$, we obtain
\[
\Pr{(L)}\otimes\Pr{(\tilde{U})}(\exp (\lambda h)) &= \EE\big[ \cEE{L}{\exp (\lambda h(\tilde{U},L))} \big] 
\leq \exp(\lambda^2/2) \label{eq:hoeffeding-lemma}. 
\]
By \cref{eq:dv-lemma1,eq:dv-egeterm,eq:hoeffeding-lemma}, 
 $\EGE \leq \frac{\minf{L;U}}{\lambda} + \frac{\lambda}{2} \label{eq:prefinal-agnostic}$.
Finally, letting $\lambda = \sqrt{2 \minf{L;U}}$, we obtain the stated result.

Now consider 0--1 loss and assume $\Alg$ is interpolating.
By the definition of the mutual information, 
we have $\minf{L;U} = \entr{L} - \entr{L\vert U}$. 
Let $\lossreal{i}$, $i \in \{0,\dots,n+1\}$ be as in the proof of \cref{thm:univ}. 
Because $\Alg$ is interpolating, the support of $\lossvec$ is $\{\lossreal{i}\vert i \in \{0,\dots,n+1\} \}$. For $i >0$, %
\*[
\Pr(L=\lossreal{i}) = \frac{1}{n+1}\Pr(\loss(\Alg(\supersam_{-i}),\supersam_i)=1) = \frac{\Risk{\Dist}{\Alg}}{n+1},
\]
where we have used the fact that
$\Risk{\Dist}{\Alg}=\Pr(\loss(\Alg(\supersam_{-i}),\supersam_i)=1)$
for $i > 0$.
Therefore, %
\[
\entr{L}&= -\sum_{i=0}^{n+1} \Pr(L=\lossreal{i}) \log(\Pr(L=\lossreal{i}))\nonumber\\
&=-(1-\Risk{\Dist}{\Alg}) \log(1-\Risk{\Dist}{\Alg}) - \Risk{\Dist}{\Alg} \log \frac{\Risk{\Dist}{\Alg}}{n+1} \label{eq:tat-term1}.
\]
Similarly, we have
\[
\entr{L\vert U} &= \frac{1}{n+1} \sum_{i=1}^{n+1} \centr{U=i}{L} \nonumber\\
&= -(1-\Risk{\Dist}{\Alg}) \log(1-\Risk{\Dist}{\Alg}) - \Risk{\Dist}{\Alg} \log \Risk{\Dist}{\Alg}\label{eq:tat-term2}.
\]
Using \cref{eq:tat-term1}, \cref{eq:tat-term2}, and the definition of the mutual information, the stated result follows. 
\end{proof}

\begin{remark}[Proof of \cref{thm:gen-agnostic}]
From \cref{thm:tat} and the inequality 
$\minf{L;U} \leq  \minf{L;U\vert \supersam}$,
we obtain \cref{thm:gen-agnostic}.
\end{remark}

 \begin{remark}[Maximal gaps]
Using the fact that $\minf{\Alg(\SS);U\vert \supersam} \le \entr{U} \le \log (n+1)$,
 \cref{thm:tat} and \cref{eq:diff-depmsr} imply that, for 0--1 loss and interpolating learning algorithms,
 we have
\begin{equation*}
   \Risk{\Dist}{\Alg} = \frac{\minf{L;U}}{\log(n+1)} \leq  \frac{\minf{L;U\vert \supersam}}{\log(n+1)} \leq  \frac{\minf{\hat{Y};U\vert \supersam}}{\log(n+1)} \leq  \frac{\minf{\Alg(\SS);U\vert \supersam}}{\log(n+1)} \le 1.  
\end{equation*}
Starting from this chain of inequalities, we can 
investigate the gap between these different measures of dependency. 
Of course, the maximal (additive) gap between all these measures is $\log(n+1)$. Is this gap achieved?
\begin{itemize}[leftmargin=1.5em]
    \item  $\minf{\hat{Y};U\vert \supersam}$ versus $\minf{\Alg(\SS);U\vert \supersam}$: 
    Note that, for binary classification under 0--1 loss, we have $\minf{L;U\vert \supersam}=\minf{\hat{Y};U\vert \supersam}$. This is due to $L$ being a one-to-one function of $\supersam$ and $\hat{Y}$.
    As shown in \cref{rem:seperation}, there is a maximal gap between $\minf{L;U\vert \supersam}$ and $\minf{\Alg(\SS);U\vert \supersam}$.  Therefore,  there exists a learning scenario in which  $\minf{\Alg(\SS);U\vert \supersam}=\Omega(\log(n+1))$ while  $\minf{\hat{Y};U\vert \supersam} \in o(\log(n+1))$. 

    \item  $\minf{L;U\vert \supersam}$ versus $\minf{\hat{Y};U\vert \supersam}$: 
    Consider a setting where $\inspace=\outspace=[-1,1]$ and the loss function is $\loss(\hat y,(x,y)) = \indic{\hat y y<0}$. 
    Let the data distribution be that of $(X,\targetfun(X))$, where $X$ is uniformly distributed on $\inspace$ and $\targetfun(x)=\indic{x>0}$. Consider the learning rule $\Alg(\SS)(x) = y$ where $(x,y) \in \SS$ and $\Alg(\SS)(x)=x$ otherwise. This algorithm is consistent by design and the expected risk of this algorithm is zero. Therefore, \cref{thm:univ} and \cref{thm:tat} show that $\minf{L;U}=\minf{L;U\vert \supersam}=0$. However, it can be easily seen that $\minf{\hat{Y};U\vert \supersam}=\log(n+1)$. Thus there exists a learning scenario such that there is a maximal gap between $\minf{\hat{Y};U\vert \supersam}$ and $\minf{L;U\vert \supersam}$.

    \item  $\minf{L;U}$ versus $\minf{L;U\vert \supersam}$:  
    Our result in \cref{thm:univ} shows that the gap between these two quantities cannot be maximal. Also, as mentioned in \cref{rem:poly-decay}, for learning algorithms whose expected risk decays polynomially in $n$ to zero, $\minf{L;U}$ can only be tighter than $\minf{L;U\vert \supersam}$ by a constant factor. Note that since $\minf{L;U} \log(n+1)$ is the risk, any characterization of the gap between  $\minf{L;U}$ and $\minf{L;U\vert \supersam}$ is the same as characterization of the gap between $\minf{L;U\vert \supersam}$ and the risk. Understanding the exact gap between these two measures is important future work.
\end{itemize}
 \end{remark}
 
\begin{remark}
One advantage
of working with $\minf{L;U\vert \supersam}$ and $\minf{\hat{Y};U\vert \supersam}$, over
non-evaluated \loecmi $\minf{\Alg(\SS);U\vert \supersam}$,
is that 
the former quantities do not require one to have a  parametrization of the set of possible classifiers.
Of course, the training data themselves always serve as a  ``parametrization'' for nonrandomized learning rules, but such a parameterization leads to vacuous bounds.
(The same roadblocks pertain to plain CMI.)
 A quintessential example of a setting where natural parametrization may not exist is that of transductive learning algorithms, i.e., ones whose input includes the test input and whose output is the corresponding label prediction. The $k$-Nearest Neighbor Algorithm is one specific example. 
 \end{remark}

 \begin{remark}
  The leave-one-out CMI framework provides numerous estimates of the expected generalization error based on various measures of information. Which measure should one use to understand generalization?  
  To simplify the discussion, let us focus on the case of interpolating learners under 0--1 loss, in which case expected generalization error is simply risk.
  \cref{thm:tat} indicates that $I(L;U)$ is equal to the risk. As such, there appears to be no advantage to studying $I(L;U)$.  Of course, once one recognizes that risk is a mutual information, one can invoke information-theoretic results to obtain quantities that may be easier to estimate, such as provided by the bound $\minf{L;U} \leq  \minf{L;U\vert \supersam}$.

  Even if one can directly bound or compute $\minf{L;U}$, there may be advantages to studying measures that are never tighter, such as the quantities later in the chain of inequalities in \cref{eq:diff-depmsr}. 
  As argued by \citet{dziugaite2020robustmeasures}, 
  if a formally looser quantity controls generalization error (or risk) to a sufficient extent,
  then this looser quantity provides a more general explanation of the empirical generalization phenomena, as the adequacy of any formally tighter bound is then tautological. That is, when attempting to explain generalization phenomena, use the loosest bound that suffices. 
  
  This perspective also suggests that identifying tighter bounds should not be the goal of 
   studying generalization from an information-theoretic perspective. Instead, we seek a rich hierarchy of bounds and an understanding of their interrelationships, so that we can come to understand generalization in specific instances in terms of the level in this hierarchy needed to explain the phenomenon.  

  Besides the challenge of identifying the right quantity to explain a phenomenon of interest, there are statistical and computational barriers to studying generalization. The measures of information presented here and studied by other authors all depend on the data distribution, which in many interesting settings is not known, other than through a random sample. Even in cases where certain distributions are known, many of these quantities are computationally intractable, without exploiting special structure. There are ways to navigate around these roadblocks. One example is demonstrated by work using ``data-dependent estimates'' to study generalization in iterative algorithms in deep learning \citep{negrea2019information,haghifam2020sharpened,wang2021analyzing}.
 \end{remark}

\section{Conclusion}
We have presented a leave-one-out variant of the CMI framework,  a novel information-theoretic framework to reason about generalization in machine learning.  
For 0--1 loss and interpolating learning algorithms, \loecmi provides upper and lower bounds on risk. For consistent learners that are interpolating for any number of data, 
the \loecmi framework captures the asymptotics of risk when the risk converges to a nonzero quantity or to zero polynomially.
As an application of the \loecmi framework, we have studied the leave-one-out CMI of the one-inclusion graph algorithm \citep{haussler1994predicting}, and shown that the framework yields an optimal risk bound for learning VC classes in the realizable setting. At present, it is not known whether optimal bounds for this setting can be achieved via any other existing information-theoretic framework.

\textbf{Open problems.} Our work raises several open problems: 

\begin{enumerate} 
\item 
Is
\cref{thm:gen-agnostic} tight? 
In what settings (outside those of  \cref{thm:univ}) can one obtain tighter bounds?
In particular, can one use the leave-one-out CMI framework to obtain tight (up to universal constants) bounds on the generalization error of arbitrary algorithms, similar to what we showed for interpolating learning algorithms?
\item
Is \cref{thm:univ} tight for finite $n$? Under what conditions can we remove or tighten the binary entropy term?
\item
Can one obtain an optimal bound for the one-inclusion graph algorithm under realizability via the standard CMI framework, or is there a lower bound? Is there any optimal (improper) learner for VC classes under realizability for which CMI yields optimal bounds?
Lower bounds here would demonstrate that leave-one-out CMI is fundamentally stronger.
\item
Leave-one-out CMI can be interpreted as an information-theoretic notion of stability. Are there connections to notions of algorithmic stability  \citep{elisseeff2003leave}? 
\end{enumerate}

In general, it is an open challenge to determine the  \loecmi of common learning algorithms to better understand this framework.

\section*{Acknowledgments}

MH is supported by the Vector Institute.  DMR is supported in part by Canada CIFAR AI Chair funding through the Vector Institute, an NSERC Discovery Grant, Ontario Early Researcher Award, a stipend provided by the Charles Simonyi Endowment, and a New Frontiers in Research Exploration Grant.
SM is a Robert J.\ Shillman Fellow, his research is supported in part by 
the Israel Science Foundation (grant No.\ 1225/20), by a grant from the United States - Israel
Binational Science Foundation (BSF), by an Azrieli Faculty Fellowship, by Israel PBC-VATAT, 
and by the Technion Center for Machine Learning and Intelligent Systems (MLIS).
This work was done in part while GKD and DMR were visiting the Simons Institute for the Theory of Computing.

\renewcommand{\bibfont}{\small}
\printbibliography

\end{document}